\documentclass[conf]{ieeeconf}%

\IEEEoverridecommandlockouts                              %

\usepackage{xr}

\newcommand{\omitted}[1]{}%

\title{%
Distributed Online Submodular Maximization under Communication Delays: A Simultaneous Decision-Making Approach
}
\author{Zirui Xu, Vasileios Tzoumas$^\dagger$
	\thanks{%
 $^\dagger$Department of Aerospace Engineering, University of Michigan, Ann Arbor, MI 48109 USA;  {\tt\footnotesize \{ziruixu,vtzoumas\}@umich.edu}} 
    \thanks{This work was supported by NSF CAREER Award No. 2337412 and ARO Early Career Program Award W911NF-25-1-0280.}
}

\usepackage{cite}

\usepackage{comment}
\usepackage{siunitx}
\usepackage{relsize}
\usepackage{ifthen}
\usepackage[colorinlistoftodos]{todonotes}

\usepackage[vlined,ruled,linesnumbered]{algorithm2e}
\usepackage{graphics} %
\usepackage{rotating}
\usepackage{color}
\usepackage{enumerate}
\usepackage[T1]{fontenc}
\usepackage{psfrag}
\usepackage{epsfig} %
\usepackage{booktabs}
\usepackage{graphicx,url}
\usepackage{multirow}
\usepackage{array}
\usepackage{latexsym}
\usepackage{amsfonts}
\usepackage{amsmath}
\usepackage{amssymb}
\usepackage{xstring}
\usepackage{algorithmic}
\usepackage{multirow}
\usepackage{xcolor}
\usepackage{prettyref}
\usepackage{flexisym}
\usepackage{bigdelim}
\usepackage{breqn} %
\usepackage{listings}
\let\labelindent\relax
\usepackage{xspace}
\usepackage{bm}
\graphicspath{{./figures/}}
\usepackage{tikz}
\usetikzlibrary{matrix,calc}
\usepackage{lipsum}
\usepackage{mdwlist}

\let\itemize\undefined
\makecompactlist{itemize}{stditemize}
\usepackage{enumitem}
\usepackage{caption}

\usepackage{amsthm}
\usepackage{mathtools}
\usepackage{mleftright}

\makeatletter
\let\NAT@parse\undefined
\makeatother
\usepackage{hyperref}
\usepackage{cleveref}
\allowdisplaybreaks

\hypersetup{%
  colorlinks=true,
  linkcolor=blue,
  filecolor=magenta,      
  urlcolor=black,
  citecolor=red,
  linkbordercolor={0 0 1}
}

\newtheorem{theorem}{Theorem}
\newtheorem{problem}{Problem}

\newtheorem{definition}{Definition}
\newtheorem{proposition}{Proposition}

\newtheorem{remark}{Remark}

\newcommand{\bdmath}{\begin{dmath}}
\newcommand{\edmath}{\end{dmath}}
\newcommand{\beq}{\begin{equation}}
\newcommand{\eeq}{\end{equation}}
\newcommand{\bdm}{\begin{displaymath}}
\newcommand{\edm}{\end{displaymath}}
\newcommand{\bea}{\begin{eqnarray}}
\newcommand{\eea}{\end{eqnarray}}
\newcommand{\beal}{\beq \begin{array}{lll}}
\newcommand{\eeal}{\end{array} \eeq}
\newcommand{\beas}{\begin{eqnarray*}}
\newcommand{\eeas}{\end{eqnarray*}}
\newcommand{\ba}{\begin{array}}
\newcommand{\ea}{\end{array}}
\newcommand{\bit}{\begin{itemize}}
\newcommand{\eit}{\end{itemize}}
\newcommand{\ben}{\begin{enumerate}}
\newcommand{\een}{\end{enumerate}}

\newcommand{\calA}{{\cal A}}
\newcommand{\calB}{{\cal B}}
\newcommand{\calC}{{\cal C}}

\newcommand{\calE}{{\cal E}}

\newcommand{\calG}{{\cal G}}

\newcommand{\calN}{{\cal N}}

\newcommand{\calV}{{\cal V}}

\definecolor{myblue}{RGB}{65 105 225}

\newcommand{\hide}[1]{}

\newcommand{\hiddenText}{{\color{gray} hidden text.}}
\newcommand{\hideWithText}[1]{\hiddenText}

\newcommand{\opt}{^{\star}}

\DeclareRobustCommand{\scenario}[1]{%
  \ifmmode
    \mathsf{#1}%
  \else
    \texorpdfstring{{\fontsize{8.9}{9}\selectfont\sffamily #1}\xspace}{#1}%
  \fi
}

\newcommand{\ie}{\emph{i.e.},\xspace}
\newcommand{\eg}{\emph{e.g.},\xspace}
\newcommand{\myin}{\, \in \,}

\newcommand{\sg}{\scenario{SG}}
\newcommand{\banalg}{\scenario{BSG}}

\newcommand{\myParagraph}[1]{{\bf #1.}\xspace}

\renewcommand{\opt}{\scenario{OPT}}

\newcommand{\curv}{\kappa}
\newcommand{\ourcurv}{\scenario{coin}}

\newcommand{\alg}{\scenario{DOG}}
\newcommand{\actionsel}{\scenario{ActSel}}

\newcommand{\elem}{v}

\newcommand{\distfsf}{p}

\newcommand{\solopt}{\calA^{\opt}}

\newcommand{\Reg}{\operatorname{Reg}_{T}}

\PassOptionsToPackage{end}{algorithmic}

\begin{document}

\maketitle

\thispagestyle{empty}
\pagestyle{empty}

\begin{abstract}
We provide a distributed online algorithm for multi-agent submodular maximization under communication delays. We are motivated by the future distributed information-gathering tasks in unknown and dynamic environments, where utility functions naturally exhibit the diminishing-returns property, \ie submodularity. Existing approaches for online submodular maximization either rely on sequential multi-hop communication, resulting in prohibitive delays and restrictive connectivity assumptions, or restrict each agent's coordination to its one-hop neighborhood only, thereby limiting the coordination performance. 
To address the issue, we provide the Distributed Online Greedy ({\fontsize{8}{8}\selectfont\sffamily DOG}) algorithm, which integrates tools from adversarial bandit learning with delayed feedback to enable simultaneous decision-making across arbitrary network topologies. We provide the approximation performance of {\fontsize{8}{8}\selectfont\sffamily DOG} against an optimal solution, capturing the suboptimality cost due to decentralization as a function of the network structure. Our analyses further reveal a trade-off between coordination performance and convergence time, determined by the magnitude of communication delays. By this trade-off, {\fontsize{8}{8}\selectfont\sffamily DOG} spans the spectrum between the state-of-the-art fully centralized online coordination approach~\cite{xu2023bandit} and fully decentralized one-hop coordination approach~\cite{xu2026self}.

\end{abstract}

\section{Introduction}\label{sec:Intro}
Multi-agent systems of the future will increasingly rely on agent-to-agent communication to coordinate tasks such as target tracking~\cite{xu2023bandit}, environmental mapping~\cite{atanasov2015decentralized}, and area monitoring~\cite{corah2018distributed}. These tasks are often modeled as maximization problems of the form
\begin{equation}\label{eq:intro}
	\max_{a_{i,t}\,\in\,\mathcal{V}_i,\,  \forall\, i\,\in\, \calN}\
	f_t(\,\{a_{i,t}\}_{i\myin \calN}\,), \;\;\;t=1,2,\dots,
\end{equation}
across the robotics, control, and machine learning communities, where $\calN$ denotes the set of agents, $a_{i,t}$ denotes agent $i$'s chosen action at time $t$, $\calV_i$ denotes agent $i$'s set of available actions, and $f_t\colon 2^{\prod_{i \in \calN}\calV_i}\mapsto\mathbb{R}$ denotes the objective function that captures the task utility (global objective)~\cite{krause2008near,singh2009efficient,tokekar2014multi,atanasov2015decentralized,gharesifard2017distributed,marden2017role,grimsman2019impact,corah2018distributed,schlotfeldt2021resilient,du2022jacobi,rezazadeh2023distributed,robey2021optimal,xu2025communication}. In resource allocation and information gathering applications, $f_t$ is \textit{submodular}~\cite{fisher1978analysis},  a diminishing-returns property~\cite{krause2008near}. For example, in target monitoring with multiple reorientable cameras, $\calN$ is the set of cameras, $\calV_i$ represents the possible orientations of each camera, and $f_t$ measures the number of distinct targets observed within the joint field of view.

The optimization problem in~\cref{eq:intro} is NP-hard~\cite{Feige:1998:TLN:285055.285059}, but polynomial-time algorithms with provable approximation guarantees exist when the $f_t$ is submodular. A classical example is the \emph{Sequential Greedy} (\sg) algorithm~\cite{fisher1978analysis}, which guarantees a $1/2$-approximation ratio. Many multi-agent tasks, including target tracking, collaborative mapping, and monitoring, can be cast as submodular coordination problems. Consequently, \sg and its variants have been widely adopted in the controls, machine learning, and robotics literature~\cite{krause2008near,singh2009efficient,tokekar2014multi,atanasov2015decentralized,gharesifard2017distributed,grimsman2019impact,corah2018distributed,schlotfeldt2021resilient,liu2021distributed,robey2021optimal,rezazadeh2023distributed,konda2022execution,krause2012submodular,xu2025communication}.

In this paper, we focus on applications where the environment is unpredictable and partially observable, and where the agents must solve the optimization problem in~\cref{eq:intro} via agent-to-agent communication, \eg via mesh networks. Such optimization settings are challenging since, respectively: (i) $f_t(\cdot)$ is unknown a priori, necessitating online optimization approaches where the agents jointly plan actions using only retrospective feedback (bandit feedback) {\cite{xu2023bandit}}; and (ii) the state-of-the-art agent-to-agent communication speeds are slow compared to wired connections or 5G~\cite{sadler2024low}, necessitating novel decentralized optimization paradigms that rigorously sacrifice near-optimality for scalability~\cite{xu2025communication}.  In such challenging optimization settings, \sg and its variants offer no performance guarantees~\cite{xu2023bandit}.  

In more detail, the related work on submodular maximization in unpredictable and partially observable environments, and in the decentralized optimization context of limited communication speeds, is as follows:

\paragraph{Online submodular optimization} In unpredictable settings, such as target tracking with maneuvering targets whose intentions are unknown~\cite{sun2020gaussian}, drones cannot forecast the future to evaluate $f_t$ in advance. Instead, they must coordinate actions online using retrospective feedback. The challenge deepens under partial observability: with limited sensing (\eg drones tracking targets within a restricted field of view), drones can evaluate only the reward of executed actions but not the alternative rewards of unselected ones. This bandit feedback~\cite{lattimore2020bandit} prevents agents from fully exploiting past information, thus hindering the design of near-optimal coordination strategies in such environments. To this end, sequential coordination algorithms leveraging online feedback have been proposed in~\cite{xu2023online,xu2023bandit}, which provide guaranteed suboptimality against robots' optimal time-varying actions in hindsight.  These algorithms extend \sg to the bandit setting, expanding tools from the literature on \textit{tracking the best expert} (\eg the \scenario{EXP3-SIX} algorithm~\cite{neu2015explore}) to the multi-agent setting upon accounting for the submodular structure of the optimization problem.

\paragraph{Online submodular optimization under low communication speeds} But online sequential algorithms~\cite{xu2023online,xu2023bandit}, similar to their offline sequential greedy counterparts~\cite{krause2008near,singh2009efficient,tokekar2014multi,atanasov2015decentralized,gharesifard2017distributed,grimsman2019impact,corah2018distributed,schlotfeldt2021resilient,liu2021distributed,robey2021optimal,rezazadeh2023distributed,konda2022execution,krause2012submodular}, cannot scale under real-world communication conditions~\cite{xu2025communication}. Since they perform sequential multi-hop communication over (strongly) connected networks to enable near-optimality, they can cause excessive communication delays between consecutive time steps $t$ and $t+1$~\cite{xu2025communication}.
In particular, for these state-of-the-art algorithms (offline and online variants), the communication delays increase quadratically or even cubically as the number of agents increases~\cite{xu2025communication}. 
For example, the Bandit Sequential Greedy (\banalg) algorithm in~\cite{xu2023bandit} requires: (i) a communication complexity that is cubic in the number of agents at each time step (decision round) over a worst-case directed network, %
such that all agents can obtain the feedback of their selected actions, and (ii) the number of decision rounds being quadratic in the number of agents such that the algorithm can converge. That is, it takes up to a \textit{quintic} time in the number of agents for \banalg to achieve near-optimal coordination performances~\cite[Theorem~6]{xu2026self}.

To address the communication issues above, novel distributed optimization algorithms have been proposed that achieve linear time complexity in the number of agents.  To this end, for example, they (i) require each agent to coordinate with \textit{one-hop neighbors only}, and (ii) operate over {arbitrary network topologies}. Such an algorithm is the Resource-Aware distributed Greedy (\scenario{RAG}) algorithm~\cite{xu2025communication}.
While \scenario{RAG} matches the performance of Sequential Greedy in fully centralized networks, for arbitrary network topologies, it suffers a suboptimality cost, as a function of the network topology.
However, \scenario{RAG} applies to offline optimization only, where $f_t(\cdot)$ is know a priori, instead of online.  Another example is the \actionsel subroutine proposed in~\cite{xu2026self}, which extends \scenario{RAG} to the online settings

While the online approach \actionsel enables simultaneous decisions by avoiding sequential communication and yet still provides an asymptotically the same suboptimality guarantee as \scenario{RAG}, it has even larger potentials: the reliance on strictly local information limits the coordination performance, as agents cannot access broader information from beyond their immediate neighbors. Therefore, the following research question arises: 
\textit{Under communication delays, how does each agent coordinate with others beyond its immediate neighborhood to maximize action coordination performance without sacrificing decision speed?}

\myParagraph{Contributions}
In this paper, we develop an online optimization algorithm that allows agents to exploit multi-hop communication in order to leverage information from beyond immediate neighbors, such that the performance gap between distributed and centralized coordination can be minimized. To address the inevitable multi-hop communication delays, we leverage techniques from \textit{bandit learning with delayed feedback}~\cite{thune2019nonstochastic}, enabling agents to select asymptotically near-optimal actions despite outdated feedback.
The algorithm has the following properties:

\setcounter{paragraph}{0}
	
\paragraph{Approximation Performance} 
The algorithm provides a suboptimality bound against an optimal solution to \cref{eq:intro}.  Particularly, the bound captures the suboptimality cost due to decentralization, as a function of each agent's multi-hop coordination neighborhood. For example, as long as the network is connected (not necessarily fully connected), then the algorithm's approximation bound is $1/2$ because every agent can receive information from all others via multi-hop communication. In particular, \alg's bound is lower than \banalg's $1/(1+\kappa_f)$ due to decentralization, and better than \actionsel's $1/(1+\kappa_f) - \sum_{i\in\calN}\scenario{coin}(\calN_i^{(1)})$ since multi-hop communication enlarges the coordination neighborhood, where $\calN_i^{(1)}$ denotes $i$'s one-hop neighborhood 
(\Cref{sec:bound}).  

\paragraph{Convergence Time} The algorithm enables agents to \textit{simultaneously select actions} for every $(2\tau_f+ \tau_c\, \bar{d})$ time, where $\tau_f$ and $\tau_c$ are the times for one function evaluation and transmitting one action for one-hop communication, respectively among all $i\in\calN$. The convergence time of \alg is $\Tilde{O}\left[(\tau_f+\tau_c\,\bar{d})\,|\calN|^2\,\max_{i\in\calN}\, (|\calV_i|\, +\, d_i)\,/\,\epsilon\right]$, slower than the \actionsel subroutine in~\cite{xu2026self} by a factor of $\bar{d}^2$, while up to $|\calN|^3$ faster than \banalg~\cite{xu2023bandit} (\Cref{sec:resource-guarantees}).

\section{Distributed Online Submodular Maximization Under Communication Delays}\label{sec:problem}

We present the problem formulation, using the notation:
\begin{itemize}[leftmargin=*]
    \item $\calV_\calN \triangleq \prod_{i\myin \calN} \,\calV_i$ is the cross product of sets $\{\calV_i\}_{i\myin \calN}$.
    \item $[T]\triangleq\{1,\dots,T\}$ for any positive integer $T$;
    \item $f(\,a\,|\,\calA\,)\triangleq f(\,\calA \cup \{a\}\,)-f(\,\calA\,)$ is the marginal gain of set function $f:2^\calV\mapsto \mathbb{R}$ for adding $a \in \calV$ to $\calA \subseteq\calV$.
    \item $|\calA|$ is the cardinality of a discrete set $\calA$. 
\end{itemize}
We also use the following framework about the agents' communication network and their global objective $f$.

\myParagraph{Communication network} 
The distributed communication network $\calG=\{\calN, \calE\}$ \textit{can be directed and even disconnected}, where $\calE$ is the set of communication channels. When $\calG$ is fully connected (all agents receive information from all others), we call it \textit{fully centralized}. In contrast, when $\calG$ is fully disconnected (all agents are isolated, receiving information from no other agent), we call it \textit{fully decentralized}.

\myParagraph{Communication neighborhood}  
When a communication channel exists from agent $j$ to $i$, \ie $(j\rightarrow i) \in \calE$, $i$ can receive, store, and process information from $j$. The set of all agents from which $i$ can receive information, possibly through multi-hop communication, is denoted by $\calN_i$. Thus, $\calN_i$ represents agent $i$'s \textit{$\infty$-hop in-neighborhood}. For simplicity, we refer to $\calN_i$ as agent $i$'s \textit{neighborhood} and assume it remains constant over $[T]$. Information originating from different neighbors $j\in\calN_i$ may take varying amounts of time to reach $i$, depending on the message size, communication data rate, and the number of hops from $j$ to $i$.

\myParagraph{Communication delay}
The communication delay is determined by the radius of agent $i$'s neighborhood, \ie the number of edges from the furthest multi-hop neighbor to $i$. We denote this delay by $d_i$. 
    In particular, $d_i$ is also the delay for agent $i$ to receive the reward of selecting action $a_{i,t}$ at $t$.
    That is, the value of $r_{a_{i,t},\,t}$ will be available at $t+d_i$.

\begin{definition}[Normalized and Non-Decreasing Submodular Set Function{~\cite{fisher1978analysis}}]\label{def:submodular}
A set function $f:2^\calV\mapsto \mathbb{R}$ is \emph{normalized and non-decreasing submodular} if and only if 
\begin{itemize}[leftmargin=*]
\item (Normalization) $f(\,\emptyset\,)=0$;
\item (Monotonicity) $f(\,\calA\,)\leq f(\,\calB\,)$, $\forall\,\calA\subseteq \calB\subseteq \calV$;
\item (Submodularity) $f(\,s\,|\,\calA\,)\geq f(\,s\,|\,{\mathcal{B}}\,)$, $\forall\,\calA\subseteq {\mathcal{B}}\subseteq\calV$ and $s\in \calV$.
\end{itemize}
\end{definition}

Intuitively, if $f(\calA)$ captures the number of targets tracked by a set $\calA$ of sensors, then the more sensors are deployed, more or the same targets are covered; this is the non-decreasing property.  Also, the marginal gain of tracked targets caused by deploying a sensor $s$ \emph{drops} when \emph{more} sensors are already deployed; this is the submodularity~property.

\begin{definition}[2nd-order Submodular Set Function{~\cite{crama1989characterization,foldes2005submodularity}}]\label{def:conditioning}
$f:2^\calV\mapsto \mathbb{R}$ is \emph{2nd-order submodular} if and only if 
\begin{equation}\label{eq:conditioning}
    f(s\,|\,\calC) - f(s\,|\,\calA\cup\calC) \geq f(s\,|\,\calB\cup\calC) - f(s\,|\,\calA\cup\calB\cup\calC),
\end{equation}
for any \emph{disjoint} $\calA, \calB, \calC\subseteq \calV$ ($\calA \cap \calB \cap \calC =\emptyset$) and  $s\in\calV$.
\end{definition}

Intuitively, if $f(\,\calA\,)$ captures the number of targets tracked by a set $\calA$ of sensors, then \emph{marginal gain of the marginal gains} drops when more sensors are already deployed.

\begin{problem}[Distributed Online Submodular Maximization under Communication Delays]
\label{pr:online}
At each time step $t\in[T]$, given the multi-hop neighborhood $\calN_i$, each agent $i\in\calN$ needs to select an action $a_{i,t}$ to jointly solve 
\begin{equation}\label{eq:problem}
    \max_{a_{i,t}\in\mathcal{V}_i, \forall\, i\in \calN} \sum_{t=1}^T f_t(\,\{a_{i,t}\}_{i\myin \calN}\,)
\end{equation}
where $f_t\colon 2^{\calV_{\calN}}\mapsto \mathbb{R}$ is a normalized, non-decreasing submodular, and 2nd-order submodular set function, and
each agent $i$ can access the value of $f_t(\,\calA\,)$ only after it has selected $a_{i,t}$ at time $t$ and received $\{a_{j,t}\}_{j\in \calN_{i}}$ at time $t+d_i$, $\forall\,\calA\subseteq \{a_{i,t}\}\cup\{a_{j,t}\}_{j\in \calN_{i}}$. 
\end{problem}

\Cref{pr:online} is a generalization to the problem in~\cite{xu2023bandit} by considering (i) the impact of communication delays, and (ii) an arbitrary rather than a connected communication network. Moreover, \Cref{pr:online} differs from~\cite{xu2025communication} by (i) addressing unknown environments, and (ii) allowing for multi-hop instead of merely one-hop communications. 

The action coordination performance in \Cref{pr:online} highly depends on the network $\{\calN_i\}_{i\in\calN}$: it will improve as the network becomes more centralized (from all agents coordinating with none to all agents coordinating with all). 
For example, consider the target monitoring scenario with multiple reorientable cameras: as the cameras become more centralized, they can each coordinate with more others to avoid covering the same targets, thus improving the total number of covered targets. Therefore, in this paper, we propose to adopt multi-hop communication to maximize each agent's information access over the distributed communication network. To mitigate the influence of communication delays to the action coordination frequency, we leverage tools from bandit learning with delayed feedback, which will be shown in the next section. %

\section{Distributed Online Greedy Algorithm (\alg)} \label{sec:algorithm}

\setlength{\textfloatsep}{3mm}
\begin{algorithm}[t]
	\caption{Distributed Online Greedy (\alg) for Agent $i$
	}
	\begin{algorithmic}[1]
		\REQUIRE \!Number of time steps $T$, agent $i$'s action set $\calV_i$, agent $i$'s in-neighborhood $\calN_i$, {communication delay $d_i$}. %
		\ENSURE \!Agent $i$'s action $a_{i,\, t}$, $\forall t\in[T]$.
		\medskip
            \STATE $\eta_i\gets\sqrt{\log{|\calV_i|}\,/\,[(|\calV_i|+d_i)\,T]}$;
            \STATE $w_{1}\gets\left[w_{1,1}, \dots, w_{|\calV_i|,1}\right]^\top$ with $w_{v,1}=1, \forall a\in \calV_i$;
            \FOR {\text{each time step} $t\in [T]$}
		\STATE \textbf{get} distribution $\distfsf_t\gets{w_t}\,/\,{\|w_t\|_1}$; 
		\STATE \textbf{draw} action $a_{i,t}\in\calV_i$ \textbf{from} $\distfsf_t$;
        \STATE \textbf{broadcast} $a_{i,t}$ potentially via multi-hop communication;
		\STATE \textbf{receive} neighbors' actions $\{a_{j, s}\}_{j\in\calN_{i}}$ for $\{s:s+d_i=t\}$;
		\STATE $r_{a_{i,s}, s}\gets f_s(\,a_{i,s}\,|\, \{a_{j, s}\}_{j\in\calN_{i}}\,)$ and \\
  \textbf{normalize $r_{a_{i,s}, s}$ to} $[0,1]$;
        \STATE $\hat{r}_{a,s} \gets 1 - \frac{{\bf 1}(a_{i,s}\,=\,a)}{p_{a,s}}\left(1\,-\,r_{a_{i,s}, s}\right)$, $\forall a\in\calV_i$;
            \STATE $w_{a,t+1}\gets w_{a,t}\exp{(\eta_i \,\hat{r}_{a,s})}, \forall a\in\calV_i$;	
		\ENDFOR
	\end{algorithmic}\label{alg:main}
\end{algorithm}

We present the Distributed Online Greedy algorithm (\alg) for \Cref{pr:online}. 
Particularly, \Cref{pr:online} takes the form of adversarial bandit problems with delayed feedback. Therefore, in the following, we first present the problem formulation of adversarial bandit with delayed feedback (\Cref{subsec:prelim}) and then the main algorithm (\Cref{subsec:action}).

\subsection{Adversarial Bandit with Delayed Feedback}\label{subsec:prelim}

The adversarial bandit with delayed feedback problem involves an agent selecting a sequence of actions to maximize the total reward over a given number of time steps~\cite{thune2019nonstochastic}.  The challenges are: (i) at each time step $t$, no action's reward is known to the agent a priori, and (ii) after an action is selected, only the selected action's reward will become known with a time delay $d_t$, which is assumed to be known a priori. We present the problem in the following using the notation:
\begin{itemize}[leftmargin=*]
    \item $\calV$ denotes the available action set;
    \item $v_{t}\in\calV$ denotes the agent's selected action at $t$;
    \item $r_{v_t,\,t}\in[0,1]$ denotes the reward of selecting $v_{t}$ at $t$;
    \item $d_t$ is the delay for the reward of selecting action $v_t$ at $t$ to be received. That is, the value of $r_{v_t,\,t}$ will be known by the agent at $t+d_t$.
\end{itemize}

\begin{problem}[Adversarial Bandit with Delayed Feedback~\cite{thune2019nonstochastic}]\label{pr:bandit}
Consider a horizon of $T$ time steps. At each time step $t\in[T]$, the agent needs to select an action $v_t\in\calV$ such that the regret
\begin{equation}\label{eq:bandit}
    \operatorname{Regret}_T \triangleq\max_{v\myin\mathcal{V}} \; \sum_{t=1}^T\;r_{v,\,t} \;- \;\sum_{t=1}^T\;r_{v_t,\,t},
\end{equation}
is minimized, where no actions' rewards are known a priori, and only the selected action's reward $r_{v_t,\,t}\in[0,1]$ will become known at $t+d_t$.
\end{problem}

The goal of solving \Cref{pr:bandit} is to achieve a sublinear $\operatorname{Regret}_T$, \ie $\operatorname{Regret}_T/T\rightarrow 0$ for $T\rightarrow \infty$, since this implies that the agent asymptotically chooses optimal actions even though the rewards are unknown a priori~\cite{thune2019nonstochastic}.

\subsection{\alg Algorithm}\label{subsec:action}
We introduce the Distributed Online Greedy (\alg) algorithm (\Cref{alg:main}). \alg enables agents to solve \Cref{pr:online} by simultaneously solving their own instance of \Cref{pr:bandit}. 
To describe the algorithm, we use the notation:
\begin{itemize}[leftmargin=*]
    \item $\calA_t\triangleq \{a_{i,t}\}_{i\in\calN}$ is the set of all agents' actions at $t$;
    \item $\solopt\in\arg\max_{a_{i}\in\mathcal{V}_{i},\, \forall\, i\in\calN} \sum_{t=1}^T f_t(\{a_{i}\}_{i\in \calN})$ is the optimal actions for agents $\calN$ that solve \Cref{pr:online}.
\end{itemize}

Intuitively, our goal is for each agent $i$ at each time step $t$ to efficiently select an action $a_{i,t}$ that maximizes the marginal gain $f_t(\,a\,|\,\{a_{j,t}\}_{j\in\calN_{i}}\,)$. That is, \alg aims to efficiently minimize the following quantification:

\begin{definition}[Static Regret for Each Agent $i$]\label{def:action-regret}
    Given that agent $i$ has multi-hop coordination neighborhood $\calN_{i}$. At each time step $t$, suppose agent $i$ selects an action $a_{i,t}$. Then, the static regret of $\{a_{i,t}\}_{t\in [T]}$ is defined as
    \begin{align}\label{eq:action}
        &\operatorname{Reg}_T\mleft(\{a_{i,t}\}_{t\in [T]}\mright) \triangleq \\\nonumber
        &\max_{a\in\calV_i} \sum_{t=1}^{T} f_t(\,a\,|\,\{a_{j,t}\}_{j\in\calN_{i}}\,) - \sum_{t=1}^{T} f_t(\,a_{i,t}\,|\,\{a_{j,t}\}_{j\in\calN_{i}}\,).
    \end{align}
\end{definition}

Ideally, the agents select actions \textit{simultaneously}, unlike offline algorithms such as \sg~\cite{fisher1978analysis}.
But if the agents aim to select actions simultaneously, $\{a_{j,t}\}_{j\in\calN_{i}}$ will become known only after agent $i$ selects $a_{i,t}$ and communicates with $\calN_{i}$. Therefore, computing the marginal gain is possible only in hindsight, after all agents' decisions have been finalized for time step $t$. Moreover, after $a_{i,t}$ is selected, the feedback $\{a_{j,t}\}_{j\in\calN_{i}}$ cannot be transmitted to agent $i$ until after a delay $d_i$, due to potentially multi-hop communication. Thus, \Cref{pr:online} aligns with the framework of \Cref{pr:bandit} at the single-agent level, where the reward of selecting $a_{i,t}\in\calV_i$ at time $t$, \ie $r_{a_{i,t},t}\triangleq f_t(\,a_{i,t}\,|\,\{a_{j,t}\}_{j\in\calN_{i}}\,)$, will not be known by agent $i$ until time $t+d_i$. 

\alg starts by initializing a learning rate $\eta_i$ and a weight vector $w_t$ for all available actions $a\in\calV_i$ (\Cref{alg:main}'s lines 1--2). Then, at each $t\in[T]$, it sequentially executes the following steps:
\begin{itemize}[leftmargin=*]
    \item Compute probability distribution $p_t$ using $w_{t}$ (lines 3--4);
    \item Select action $a_{i,t}\in\calV_i$ by sampling from $p_t$ (line 5);
    \item Send $a_{i,t}$ to out-neighbors and relay in-neighbors' actions if possible (line 6);
    \item Receive in-neighbors' past actions $\{a_{j,s}\}_{j\in\calN_{i}}$ for $\{s:s+d_i=t\}$, where $d_i$ is the time for all $\{a_{j,s}\}_{j\in\calN_{i}}$ to reach $i$, \ie the communication delay (line 7);
    \item Compute reward $r_{a_{i,s},s}$, estimate reward $\hat{r}_{a,s}$ of each $a\in\calV$, and update weight $w_{a,t+1}$ of each $a\in\calV_i$ (lines 8--11).\footnote{{The coordination algorithms in~\cite{du2022jacobi,rezazadeh2023distributed,robey2021optimal} instruct the agents to select actions simultaneously at each time step as {\fontsize{7}{7}\selectfont\sffamily DOG}, but they lift the coordination problem to the continuous domain and require each agent to know/estimate the gradient of the multilinear extension of $f_t$, which leads to a decision time at least one order higher than {\fontsize{7}{7}\selectfont\sffamily DOG}~\cite{xu2025communication}.}}
\end{itemize}

\section{Approximation Guarantees}\label{sec:bound}

We present the suboptimality bound of \alg. The bound compares \alg's solution to the optimal solution of \Cref{pr:online}. Leveraging the concept of \scenario{coin} (\Cref{def:coin}) that captures the suboptimality cost of decentralization, the bound covers the spectrum of \alg's approximation performance from when the network is fully centralized (all agents communicating with \textit{all}) to fully decentralized (all agents communicating with \textit{none}). 

\begin{definition}[Centralization of Information~\cite{xu2025communication}]
\label{def:coin}
For each time step $t\in [T]$, consider a function $f_t:2^{\calV_\calN}\mapsto$ $\mathbb{R}$ and a communication network $\{\calN_i\}_{i\myin\calN}$ where each agent $i\in \calN$ has selected an action $a_{i,t}$. Then, at time $t$, agent $i$'s \emph{centralization of information} is defined as
\begin{equation}\label{eq:ourcurv}
  \ourcurv_{f_t,i} (\calN_{i})\triangleq f_t(a_{i,t}) - f_t(a_{i,t}\,|\,\{a_{j,t}\}_{j\myin\calN_{i}^c}).
\end{equation}
\end{definition}

$\ourcurv_{f_t,i}$ measures how much $a_{i,t}$ can overlap with the actions of agent $i$'s non-neighbors. In the best scenario, where $a_{i,t}$ does \underline{not} overlap with other actions at all, \ie $f_t(a_{i,t}\,|\,\{a_{j,t}\}_{j\myin\calN_{i}^c})=f_t(a_{i,t})$, then $\ourcurv_{f_t,i}=0$.  In the worst case instead where $a_{i,t}$ is fully redundant, \ie $f_t(a_{i,t}\,|\,\{a_{j,t}\}_{j\myin\calN_{i}^c})=0$, then $\ourcurv_{f_t,i}= f_t(a_{i,t})$.

We also need the following definition to present the approximation performance of \alg.
\begin{definition}[Curvature~\cite{conforti1984submodular}]\label{def:curvature}
        The curvature of a normalized submodular function $f\colon 2^{\calV}\mapsto \mathbb{R}$ is defined as
        \begin{equation}
            \kappa_f\triangleq 1-\min_{\elem\in\calV}{[f(\calV)-f(\calV\setminus\{\elem\})]}/{f(\elem)}.
        \end{equation}
\end{definition}$\kappa_f$ measures how far~$f$ is from modularity. When $\kappa_f = 0$, we have $f(\calV) - f(\calV \setminus \{v\}) = f(v)$ for all $v \in \calV$, \ie the marginal contribution of each element is independent of the presence of other elements, and thus $f$ is modular. In contrast, $\kappa_f = 1$ in the extreme case where there exists some $v \in \calV$ such that $f(\calV) = f(\calV \setminus \{v\})$, \ie $v$~has no contribution in the presence of $\calV\setminus\{v\}$.

\begin{theorem}[Approximation Performance]\label{th:main} 
Over $t\in [T]$, given the communication network $\{\calN_{i}\}_{i\in\calN}$, \alg instructs each agent $i\in\calN$ to select actions $\{a_{i,t}\}_{t\in[T]}$ that guarantee
\begin{align}
    &\mathbb{E}\mleft[\sum_{t=1}^T f_t(\calA_t)\mright] \geq \frac{1}{1+\curv_f}\sum_{t=1}^T f_t(\solopt) \label{eq:thm-4}\\\nonumber
    &- \frac{\kappa_f}{1+\curv_f} \sum_{i\in\calN} \mathbb{E} \mleft[\sum_{t=1}^T \ourcurv_{f_t,i}(\calN_{i})\mright] - \underbrace{\Tilde{O}\mleft(|\calN|\sqrt{{\overline{|\calV|\,+\,d}}\,{T}}\mright)}_{\psi(T)},
\end{align}where $\kappa_{f}\triangleq\max_{t\in [T]}\kappa_{f_t}$, $\overline{|\calV|\,+\,d} \triangleq\max_{i\in\calN}\, (|\calV_i|\, +\, d_i)$, $\bar{d}\triangleq\max_{i\in\calN} d_i$, the expectation is due to \alg's internal randomness, and $\tilde{O}(\cdot)$ hides $\log$~terms.

\end{theorem}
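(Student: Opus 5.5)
The proof has two parts. First, a per-time-step deterministic inequality that relates $f_t(\calA_t)$ to $f_t(\solopt)$ through the agents' marginal gains. Second, a regret bound for each agent's delayed-feedback bandit instance, which we then sum over agents. Throughout, fix $t$ and write $\calA_t=\{a_{i,t}\}_{i\in\calN}$ and $\solopt=\{a_i^\star\}_{i\in\calN}$.

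\textbf{Step 1 (regret of each agent).} For each $i$, we view \alg as the delayed-feedback EXP3 of~\cite{thune2019nonstochastic}. The delay is fixed at $d_i$ and the rewards $r_{a,t}=f_t(a\,|\,\{a_{j,t}\}_{j\in\calN_i})$ are normalized to $[0,1]$. With $\eta_i=\sqrt{\log|\calV_i|/[(|\calV_i|+d_i)T]}$, the standard analysis yields
\[
\mathbb{E}\bigl[\operatorname{Reg}_T(\{a_{i,t}\}_{t\in[T]})\bigr]=O\bigl(\sqrt{(|\calV_i|+d_i)T\log|\calV_i|}\bigr).
\]
The analysis bounds the stability term of the importance-weighted estimates by $\eta_i|\calV_i|T$ and the drift from delay by $\eta_i d_iT$. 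One subtlety needs care: the rewards are adaptive, since they depend on the other agents' random actions. The analysis is nevertheless oblivious-to-adaptive robust for regret against a fixed comparator, because the estimator is conditionally unbiased given the history. Applying this with comparator $a=a_i^\star$ and summing over $i$ gives the $\psi(T)$ term.

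\textbf{Step 2 (per-round approximation inequality).} We need, for each $t$,
\begin{align*}
f_t(\solopt) \le{}& (1+\kappa_f)\,f_t(\calA_t) + \kappa_f\sum_{i}\ourcurv_{f_t,i}(\calN_i)\\
&+ \sum_i\Bigl[f_t(a_i^\star\,|\,\{a_{j,t}\}_{j\in\calN_i}) - f_t(a_{i,t}\,|\,\{a_{j,t}\}_{j\in\calN_i})\Bigr].
\end{align*}
The derivation proceeds as follows.
\begin{itemize}
\item By monotonicity and submodularity, $f_t(\solopt)\le f_t(\calA_t)+\sum_i f_t(a_i^\star\,|\,\calA_t)$.
\item Since $\{a_{j,t}\}_{j\in\calN_i}\subseteq\calA_t$, each term satisfies $f_t(a_i^\star\,|\,\calA_t)\le f_t(a_i^\star\,|\,\{a_{j,t}\}_{j\in\calN_i})$. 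Adding and subtracting $f_t(a_{i,t}\,|\,\{a_{j,t}\}_{j\in\calN_i})$ isolates the regret bracket.
\item It remains to bound $\sum_i f_t(a_{i,t}\,|\,\{a_{j,t}\}_{j\in\calN_i})$ by $\kappa_f f_t(\calA_t)+\kappa_f\sum_i\ourcurv_{f_t,i}$ plus $f_t(\calA_t)$.
\item Using 2nd-order submodularity (\Cref{def:conditioning}) with $\calC=\{a_{j,t}\}_{j\in\calN_i}$ and $\calA$ equal to the non-neighbors, the loss from not conditioning on non-neighbors is controlled:
\[
f_t(a_{i,t}\,|\,\calN_i\text{-actions}) - f_t(a_{i,t}\,|\,\calA_t\setminus\{a_{i,t}\}) \le f_t(a_{i,t}) - f_t(a_{i,t}\,|\,\calN_i^c\text{-actions}) = \ourcurv_{f_t,i}(\calN_i).
\]
\item By the curvature definition, $f_t(a_{i,t}\,|\,\calA_t\setminus\{a_{i,t}\})\ge(1-\kappa_f)f_t(a_{i,t})$. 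Combined with $\sum_i f_t(a_{i,t})\ge f_t(\calA_t)$, this yields
\[
\sum_i f_t(a_{i,t}\,|\,\calA_t\setminus\{a_{i,t}\})\le f_t(\calA_t)\quad\text{(telescoping/submodularity)}.
\]
\end{itemize}

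Getting exactly the coefficients $1+\kappa_f$ and $\kappa_f$, rather than $2$ and $1$, is the main obstacle. We would handle it as in the curvature-based analysis of \sg~\cite{conforti1984submodular} and \scenario{RAG}~\cite{xu2025communication}. The key is to replace the crude bound $f_t(\solopt)\le f_t(\calA_t)+\sum_i f_t(a_i^\star\,|\,\calA_t)$ with the curvature-refined
\[
f_t(\solopt\cup\calA_t)\ge f_t(\solopt)+(1-\kappa_f)\sum_i f_t(a_{i,t}\,|\,\cdot),
\]
so that only a $\kappa_f$ fraction of each agent's own gain, and hence of each $\ourcurv$ correction, is charged against it.

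\textbf{Step 3 (sum and take expectations).} Finally, we sum the Step 2 inequality over $t\in[T]$, take expectations over \alg's randomness, and insert the Step 1 bound. Dividing by $1+\kappa_f$ yields \cref{eq:thm-4}, where $\psi(T)=\tilde O(|\calN|\sqrt{\overline{|\calV|+d}\,T})$ absorbs the rescaling from the normalization of rewards by $\max f_t$.
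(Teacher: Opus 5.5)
Your proposal is, in substance, the paper's proof: the paper writes $f_t(\solopt)=f_t(\solopt\cup\calA_t)-\sum_{i\in\calN} f_t(a_{i,t}\,|\,\solopt\cup\{a_{j,t}\}_{j\in[i-1]})$ and lower-bounds each subtracted term via curvature by $(1-\kappa_f)f_t(a_{i,t})\geq(1-\kappa_f)f_t(a_{i,t}\,|\,\{a_{j,t}\}_{j\in\calN_i})$ (so the unspecified argument ``$\cdot$'' in your refined inequality should be $\emptyset$, which is what leaves only $\kappa_f\sum_i f_t(a_{i,t}\,|\,\{a_{j,t}\}_{j\in\calN_i})$ after the add-and-subtract); it then bounds that term by $\kappa_f[f_t(\calA_t)+\sum_i\ourcurv_{f_t,i}(\calN_i)]$ through telescoping and 2nd-order submodularity, and finishes with~\cite[Theorem~1]{thune2019nonstochastic}. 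Two small slips: the 2nd-order submodularity step needs $\calC=\emptyset$ in \Cref{def:conditioning}, with $\calA,\calB$ the non-neighbors' and neighbors' actions (taking $\calC$ to be the neighbors' actions bounds your left-hand side from below, not above), and $\sum_i f_t(a_{i,t}\,|\,\calA_t\setminus\{a_{i,t}\})\leq f_t(\calA_t)$ follows from submodularity and telescoping alone, with curvature playing no role there.
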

In particular, when the network is fully centralized,  \ie $\calN_{i}\equiv\calN\setminus\{i\}$,
    {\begin{equation}\label{eq:thm-1}
        \mathbb{E}\mleft[\sum_{t=1}^T f_t(\calA_t)\mright] \geq \frac{1}{1+\curv_f}\sum_{t=1}^T f_t(\solopt) - \underbrace{\Tilde{O}\mleft(|\calN|\sqrt{\overline{|\calV|\,+\,d}\,{T}}\mright)}_{{\phi(T)}}.
    \end{equation}}
When the network is fully decentralized, \ie $\calN_{i}\equiv\emptyset$, 
    {\begin{equation}\label{eq:thm-2}
        \mathbb{E}\mleft[\sum_{t=1}^T f_t(\calA_t)\mright] \geq (1-\curv_f)\sum_{t=1}^T f_t(\solopt) -\underbrace{\Tilde{O}\mleft(|\calN|\sqrt{\overline{|\calV|\,+\,d}\,{T}}\mright)}_{{\chi(T)}}.
    \end{equation}}

In all, as $T\to\infty$, \alg enables asymptotically near-optimal action coordination. Particularly, 
\Cref{th:main} quantifies both the convergence speed of \alg and the suboptimality of \alg due to decentralization:
\begin{itemize}[leftmargin=*]
    \item \textit{Convergence time}: $\frac{\psi(T)}{T}, \frac{\phi(T)}{T}, \frac{\chi(T)}{T}$ in \cref{eq:thm-1,eq:thm-2,eq:thm-4} capture the time needed for action selection to converge to near-optimality and its impact to the suboptimality bound. They vanish as $T\to\infty$, having no impact on the suboptimality bound anymore. %
   
    \item \textit{Decentralization}: After $\frac{\psi(T)}{T}$ vanishes as $T\to\infty$, the bound in \cref{eq:thm-4} depends on $\scenario{coin}_{f_t,i}$ capturing the suboptimality due to decentralization: the larger is $\calN_i$ for each $i\in\calN$, the smaller is $\scenario{coin}_{f_t,i}$, and the higher is \alg's approximation performance.  
    That is, \cref{eq:thm-1,eq:thm-2,eq:thm-4} imply \alg's suboptimality will improve if the agents have larger multi-hop coordination neighborhoods. %
    Importantly, the $1/(1+\curv_f)$ suboptimality bound with a fully connected network recovers the bound in~\cite{conforti1984submodular} and is near-optimal as the best possible bound for~\eqref{eq:problem} is $1-\kappa_f/e$~\cite{sviridenko2017optimal}.\footnote{{The bounds $1/(1+\curv_f)$ and $1-\kappa_f/e$ become $1/2$ and $1-1/e$ when, in the worst case, $\kappa_f=1$.}}%
\end{itemize}

\section{Runtime Analysis}\label{sec:resource-guarantees}
We present the runtime of \alg by analyzing its computation and communication complexity (accounting for message length). 
We use the notation and observations:
\begin{itemize}[leftmargin=*]
    \item $\tau_f$ is the time required for one evaluation of $f$;
    \item $\tau_c$ is the time for {transmitting the information about one action} from an agent directly to another agent;
    \item $\epsilon$ is the convergence error after $T$ iterations: $T\geq |\calN|^2/\epsilon$ is required for $\frac{\psi(T)}{T}, \frac{\phi(T)}{T}, \frac{\chi(T)}{T} \leq \epsilon$ per \cref{eq:thm-1,eq:thm-2,eq:thm-4}.
\end{itemize}

\begin{proposition}[Computational Complexity]\label{prop:computation}
At each $t\in [T]$, \alg requires each agent $i$ to execute $2$ function evaluations of $f_t$ and $O(|\calV_i|)$ additions and multiplications.
\end{proposition}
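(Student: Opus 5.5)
The argument is a line-by-line accounting of \Cref{alg:main}. At a generic iteration $t$, we count the operations agent $i$ performs and show that every step except line 8 costs $O(|\calV_i|)$ elementary arithmetic operations and no function evaluations. Line 8 is the only step that evaluates $f$. For a fixed time step, the sets $\{s: s+d_i=t\}$ contain exactly one index $s=t-d_i$, since $d_i$ is a constant fixed by the neighborhood radius. For $t\le d_i$ the set is empty and no update occurs, so those iterations cost no more than later ones. Hence at most one delayed reward is processed per iteration.

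For line 8 we expand the marginal gain. By the notation of \Cref{sec:problem},
\begin{equation*}
f_s(a_{i,s}\,|\,\{a_{j,s}\}_{j\in\calN_i}) = f_s(\{a_{i,s}\}\cup\{a_{j,s}\}_{j\in\calN_i}) - f_s(\{a_{j,s}\}_{j\in\calN_i}).
\end{equation*}
Computing it therefore takes exactly $2$ evaluations of $f_s$. Both evaluations are allowed by \Cref{pr:online}, because each argument is a subset of $\{a_{i,s}\}\cup\{a_{j,s}\}_{j\in\calN_i}$ and becomes accessible at time $s+d_i=t$. Normalizing $r_{a_{i,s},s}$ to $[0,1]$ is an affine rescaling by a known range, so it costs $O(1)$ arithmetic operations.

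The remaining lines each cost $O(|\calV_i|)$.
\begin{itemize}
\item Line 4 computes $\|w_t\|_1$ with $|\calV_i|-1$ additions and then performs $|\calV_i|$ divisions, which we count as multiplications by the reciprocal.
\item Line 5 samples from a distribution on $|\calV_i|$ points by inverse-CDF sampling. This needs one uniform draw and at most $|\calV_i|$ cumulative additions and comparisons.
\item Line 9 needs one subtraction $1-r_{a_{i,s},s}$. The indicator is nonzero only for $a=a_{i,s}$, so a single division by $p_{a_{i,s},s}$ suffices, and the other $|\calV_i|-1$ entries are simply set to $1$. This step is $O(|\calV_i|)$ at worst.
\item Line 10 performs $|\calV_i|$ multiplications $\eta_i\hat r_{a,s}$, $|\calV_i|$ exponentials, and $|\calV_i|$ multiplications of the weights.
\item Lines 6--7 are communication. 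Their cost is accounted for separately in terms of $\tau_c$, not as computation.
\end{itemize}
Lines 1--2 run once before the loop and are not part of the per-iteration cost.

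Summing these counts gives exactly $2$ function evaluations and $O(|\calV_i|)$ arithmetic operations per time step. The only delicate point is the convention that an exponential (and the sampling draw) counts as a constant number of elementary operations. We state this as the standard convention. If one prefers, the weights can be stored in log-form, in which case line 10 reduces to $|\calV_i|$ additions and the exponentials move into line 4, at the same $O(|\calV_i|)$ cost.
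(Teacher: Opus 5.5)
Your proof is correct and follows the same approach as the paper: the paper attributes the $2$ evaluations of $f_t$ to the marginal-gain computation in line 8 and the $O(|\calV_i|)$ additions and multiplications to lines 4 and 9--10 of \Cref{alg:main}, which is your line-by-line accounting. Your extra remarks make explicit what the paper leaves implicit without changing the argument: at most one delayed reward (index $s=t-d_i$) is processed per iteration because $d_i$ is fixed, and sampling and exponentials count as $O(1)$ operations.
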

\begin{proof}
    At each $t\in[T]$, \alg requires $2$ function evaluations To compute the marginal gain (\Cref{alg:main}'s line 8), along with $O(|\calV_i|)$ additions and multiplications  (\Cref{alg:main}'s lines 4 and 9--10), and thus \Cref{prop:computation} holds.
\end{proof}

\begin{proposition}[Communication Complexity]\label{prop:communication}
At each $t\myin [T]$, \alg requires $O\mleft(\tau_c\, \bar{d}\mright)$ communication time such that each agent can transmit enough actions throughout its coordination neighborhood without information congestion.
\end{proposition}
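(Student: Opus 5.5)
We will bound the time per step by a pipelined flooding argument rather than an end-to-end accounting per message. The relay rule is line 6 of \Cref{alg:main}: at each time step, each agent $i$ broadcasts its freshly sampled $a_{i,t}$ to its one-hop out-neighbors. It also forwards every in-neighbor action it has received but not yet relayed. The goal is to show that, after this pipeline has warmed up, each agent needs only $O(\bar{d})$ single-action transmissions per time step. Each such transmission costs $\tau_c$, which gives the claimed $O(\tau_c\,\bar{d})$ communication time per step.

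The first step is to track the information flow along hops. An action $a_{j,s}$ with $j\in\calN_i$ travels along a shortest path from $j$ to $i$. By the definition of $d_i$ as the radius of $i$'s neighborhood, this path has at most $d_i\le\bar{d}$ edges. Under one-hop-per-time-step relaying, $a_{j,s}$ sits at hop distance $h$ from its origin at time $s+h$. Hence, at any time $t$, the actions in transit through a fixed link $(k\to l)$ are those generated at times $t-h$ for $h\in\{0,\dots,\bar{d}-1\}$.

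The second step, which I expect to be the main obstacle, is to bound the load on each link by $O(\bar{d})$ actions per step, independent of $|\calN|$. The argument I would try first tags each relayed message with its origin time stamp and forwards only actions that are needed downstream. Each agent $k$ then sends, per step, its own action plus a bounded aggregate of the actions generated at each of the at most $\bar{d}$ earlier time stamps still in flight, one layer per hop. This needs care: a naive count gives $|\calN|\bar{d}$ actions per step. To keep the paper's $O(\tau_c\bar{d})$, I would rely on the convention used for \scenario{RAG}~\cite{xu2025communication}, where $\tau_c$ is defined as the time to transmit one hop's worth of action information. Under that convention the message for each hop layer counts as one unit, so $\bar{d}$ layers in flight cost $O(\tau_c\bar{d})$. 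I would state this interpretation explicitly as the modeling assumption behind ``without information congestion.''

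Finally, I would note that the pipeline guarantees feedback for step $s$ arrives at $i$ exactly at $s+d_i$. This is consistent with line 7 of \Cref{alg:main}, so no buffering beyond $\bar{d}$ steps arises and congestion does not accumulate over time. Summing the per-step transmission time then gives $O(\tau_c\,\bar{d})$, which proves \Cref{prop:communication}. Combined with \Cref{prop:computation}, this yields the per-step time $(2\tau_f+\tau_c\bar{d})$ quoted in the introduction.
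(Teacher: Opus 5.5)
Your route is the paper's route. Both arguments equate the per-step load with the $d_i\le\bar d$ time stamps in flight under one-hop-per-step relaying. The paper states this contrapositively: if fewer than $O(d_i)$ actions move per step, new actions pile up and the feedback delay of lines 6--7 grows. You give the pipelined, sufficiency form.

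However, the step you flag as the main obstacle is a genuine gap, and your fix does not close it. Declaring $\tau_c$ to be the cost of one hop layer contradicts this paper's definition of $\tau_c$ as the time to transmit \emph{one action}. There is also no ``bounded aggregate'' of a layer: agent $i$ needs every individual $a_{j,s}$, $j\in\calN_i$, to evaluate $f_s(a_{i,s}\,|\,\{a_{j,s}\}_{j\in\calN_i})$. A layer therefore costs as many units of $\tau_c$ as it contains agents.

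In the paper's units, the count comes out as follows. With relay-once flooding, relay $k$ forwards at step $t$ at most one action of each $j\in\calN_k$, namely the one stamped $t-\mathrm{dist}(j,k)$. Its load is thus at most $|\calN_k|+1$ actions per step. Your naive $|\calN|\bar d$ overcounts by the factor $\bar d$, but the load is still $O(|\calN|)$ rather than $O(\bar d)$, and this is tight. Take a bidirectional star with hub $c$ and $n$ leaves, so $\bar d=2$. Each leaf hears the other $n-1$ leaves only through $c$, so the link from $c$ to each leaf carries $n-1$ distinct leaf actions every step. That is $\Omega(\tau_c\,n)$ communication time per step.

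So, with $\tau_c$ as defined, the statement cannot be proved in this generality. What your pipeline argument does establish is $O(\tau_c\max_{k\in\calN}|\calN_k|)$ per step; note that $|\calN_k|\ge d_k$ always. This reduces to $O(\tau_c\bar d)$ only when $\max_{k}|\calN_k|=O(\bar d)$, e.g., on path-like topologies where each hop layer holds a single agent. The paper's one-sentence proof makes the same silent identification of one in-flight time stamp with one action, so your write-up is no weaker than the original. You should still present the per-layer unit, or the path-like restriction, as an assumption that changes the statement rather than something derived from it.
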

\begin{proof}
\Cref{prop:communication} holds since, at each $t\in[T]$, if the communication volume is less than $O(d_i)$ actions for agent $i$, then there will be newly selected actions congesting in the network, leading to an increasing amount of feedback delays (\Cref{alg:main}'s lines 6--7). 
\end{proof}

\begin{theorem}[Convergence Time]\label{th:speed}
\alg achieves $\epsilon$-convergence to near-optimal actions in $\Tilde{O}\mleft[\mleft(\tau_f+\tau_c\,\bar{d}\mright)\,\overline{|\calV|\,+\,d}\,|\calN|^2/\epsilon\mright]$.
\end{theorem}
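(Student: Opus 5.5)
The convergence time is the product of two factors: the wall-clock duration of one decision round, and the number of rounds $T$ needed for the averaged regret term in \Cref{th:main} to fall below $\epsilon$. I bound each factor separately and then multiply.

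\textbf{Duration of one round.} By \Cref{prop:computation}, each agent performs $2$ evaluations of $f_t$ per step, costing $2\tau_f$. The remaining $O(|\calV_i|)$ arithmetic operations are dominated by, or absorbed into, these evaluations, as the paper does implicitly. By \Cref{prop:communication}, each round needs $O(\tau_c\,\bar d)$ of communication time so that the pipelined multi-hop relays do not congest. Because all agents act simultaneously, the per-round time is the same for every agent and equals $O(\tau_f+\tau_c\,\bar d)$. This matches the $(2\tau_f+\tau_c\bar d)$ figure stated in the contributions.

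\textbf{Number of rounds.} The additive error in \cref{eq:thm-4} is $\psi(T)=\tilde O\bigl(|\calN|\sqrt{\overline{|\calV|+d}\,T}\bigr)$. This is simply the sum over agents of the per-agent static regret (\Cref{def:action-regret}). Each agent runs an exponential-weights bandit with delayed feedback and learning rate $\eta_i=\sqrt{\log|\calV_i|/[(|\calV_i|+d_i)T]}$, which yields regret $\tilde O(\sqrt{(|\calV_i|+d_i)T})$ in the style of \cite{thune2019nonstochastic}. This regret is in normalized reward units; I would treat the normalization constant as absorbed into the $\tilde O$, consistent with how the paper states \Cref{th:main}. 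Requiring $\psi(T)/T\le\epsilon$ gives $|\calN|\sqrt{\overline{|\calV|+d}}/\sqrt T\lesssim\epsilon$. This holds once $T=\tilde O(|\calN|^2\,\overline{|\calV|+d}/\epsilon^2)$.

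\textbf{Reconciling with the stated bound.} The claimed rate has $1/\epsilon$ rather than $1/\epsilon^2$. To match it, I would follow the paper's stated convention that "$T\ge|\calN|^2/\epsilon$ is required", carrying along the factor $\overline{|\calV|+d}$. Equivalently, $\epsilon$ measures the convergence error in squared or normalized form. I would state this convention explicitly and then take $T=\tilde O(|\calN|^2\,\overline{|\calV|+d}/\epsilon)$. Multiplying the per-round time by $T$ gives $\tilde O\bigl[(\tau_f+\tau_c\bar d)\,\overline{|\calV|+d}\,|\calN|^2/\epsilon\bigr]$.

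The main obstacle is this reconciliation of the $\epsilon$-scaling with the square-root regret. A direct computation gives the $1/\epsilon^2$ dependence, so matching the statement relies on the paper's definition of $\epsilon$ rather than on any further analysis. The remaining steps are routine bookkeeping.
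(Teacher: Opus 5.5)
Your argument is the paper's own: the per-round cost $2\tau_f+\tau_c\bar d$ from \Cref{prop:computation,prop:communication}, ignoring the $O(|\calV_i|)$ arithmetic, multiplied by the number of rounds that the paper's definition of $\epsilon$ prescribes. Your caveat is also correct and worth keeping: $\psi(T)/T\le\epsilon$ in fact requires $T=\tilde O(|\calN|^2\,\overline{|\calV|+d}/\epsilon^2)$, whereas the paper asserts $T\ge|\calN|^2/\epsilon$ (and even drops the $\overline{|\calV|+d}$ factor that reappears in the theorem), so the stated $1/\epsilon$ rate holds only under a reinterpreted error measure such as $(\psi(T)/T)^2\le\epsilon$, not under any mere normalization.
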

\begin{proof}
\Cref{th:speed} holds by combining  \Cref{prop:computation,prop:communication}, along with the definition of $\epsilon$ above, upon ignoring the time needed for additions and multiplications.
\end{proof}

\begin{remark}[Trade-off Between Coordination Performance and Convergence Time]\label{rem:tradeoff}
We observe this trade-off from \Cref{th:main,th:speed}, determined by the magnitude of communication delays. Incorporating delayed information from larger multi-hop neighborhoods improves approximation performance yet increasing also the convergence time, while restricting communication to one-hop neighbors only accelerates convergence at the expense of lower coordination performance. 
In the fully centralized case, \ie the delay takes the largest value $|\calN|-1$, \alg will recover \banalg's approximation bound of $1/(1+\kappa_f)$, while converging faster than \banalg by $O(|\calN|)$. In the one-hop coordination case, \alg will recover \actionsel's approximation bound with the same convergence time. 
\end{remark}

\section{Conclusion} \label{sec:con}
We presented the Distributed Online Greedy (\alg) algorithm for multi-agent submodular maximization under communication delays. Leveraging tools from adversarial bandit with delayed feedback, \alg enables agents to make simultaneous online decisions while incorporating delayed feedback information from multi-hop neighbors, maximizing each agent's coordination neighborhood. We provided the approximation guarantees that capture the suboptimality cost of network decentralization and showed that \alg enables agents to select asymptotically near-optimal actions. %
The analyses of approximation bounds and convergence time that revealed a trade-off between coordination performance and convergence time: as the communication delay decreases, \alg covers the spectrum between fully centralized coordination and one-hop coordination. 

\myParagraph{Future work} We will extend this work to enable the agents to actively address the trade-off of coordination performance and convergence rate, by tuning their admissible communication delays.

\bibliographystyle{IEEEtran}
\bibliography{references}

\appendices

\section{Proof of~\Cref{th:main}}\label{app:main}

We prove the main result: %
{\allowdisplaybreaks\begin{align}
    &\sum_{t=1}^{T} f_t(\calA^\opt)\nonumber\\
    &=\sum_{t=1}^{T} f_t(\calA^\opt\cup\calA_t) - \sum_{t=1}^{T} \sum_{i\in\calN} f_t(a_{i,t}\,|\,\calA^\opt\cup\{a_{j,t}\}_{j\in[i-1]}) \label{aux22:1}\\
    &\leq\sum_{t=1}^{T} f_t(\calA_t) + \sum_{t=1}^{T} \sum_{i\in\calN} f_t(a_{i}^\opt\,|\,\calA_t) \nonumber\\
    &\quad - (1-\kappa_{f}) \sum_{t=1}^{T} \sum_{i\in\calN} f_t(a_{i,t}\,|\,\{a_{j,t}\}_{j\in\calN_{i}}) \label{aux22:2}\\
    &\leq\sum_{t=1}^{T} f_t(\calA_t) + \kappa_{f} \sum_{t=1}^{T} \sum_{i\in\calN} f_t(a_{i,t}\,|\,\{a_{j,t}\}_{j\in\calN_{i}})\label{aux22:3}\\ \nonumber
    &\quad + \sum_{i\in\calN} \sum_{t=1}^{T} \mleft[f_t(a_{i}^\opt\,|\,\{a_{j,t}\}_{j\in\calN_{i}}) - f_t(a_{i,t}\,|\,\{a_{j,t}\}_{j\in\calN_{i}})\mright] \\  
    &\leq\sum_{t=1}^{T} f_t(\calA_t) + \sum_{i\in\calN} \Reg(\{a_{i,t}\}_{t\in [T]}) \nonumber\\
    &\quad + \kappa_{f} \sum_{t=1}^{T} \sum_{i\in\calN} f_t(a_{i,t}\,|\,\{a_{j,t}\}_{j\in\calN_{i}}) \label{aux22:4}\\
    &=(1+\kappa_f) \sum_{t=1}^{T} f_t(\calA_t) + \sum_{i\in\calN} \Reg(\{a_{i,t}\}_{t\in [T]}) \label{aux22:5}\\\nonumber
    &\quad+ \kappa_{f} \sum_{t=1}^{T} \sum_{i\in\calN} \mleft[f_t(a_{i,t}\,|\,\{a_{j,t}\}_{j\in\calN_{i}}) - f_t(a_{i,t}\,|\,\{a_{j,t}\}_{j\in [i-1]})\mright] \\
    &\leq(1+\kappa_f) \sum_{t=1}^{T} f_t(\calA_t) + \sum_{i\in\calN} \Reg(\{a_{i,t}\}_{t\in [T]}) \nonumber\\
    &\quad+ \kappa_{f} \sum_{t=1}^{T} \sum_{i\in\calN} \mleft[f_t(a_{i,t}) - f_t(a_{i,t}\,|\,\{a_{j,t}\}_{j\in [i-1]\setminus\calN_{i}})\mright] \label{aux22:6}\\
    &\leq (1+\kappa_f) \sum_{t=1}^{T} f_t(\calA_t) + \sum_{i\in\calN} \Reg(\{a_{i,t}\}_{t\in [T]}) \nonumber\\
    &\quad+ \kappa_{f} \sum_{t=1}^{T} \sum_{i\in\calN} \underbrace{\mleft[f_t(a_{i,t}) - f_t(a_{i,t}\,|\,\{a_{j,t}\}_{j\in \calN_{i}^c})\mright]}_{\ourcurv_{f_t,i}(\calN_{i})}, \label{aux22:7}
\end{align}}where \cref{aux22:1} holds by telescoping the sum, \cref{aux22:2} holds since $f$ is submodular and since $1-\kappa_{f} \leq \frac{f_t(a_{i,t}\,|\,\calA^\opt\,\cup\,\{a_{j,t}\}_{j\in\calN\setminus\{i\}})}{f_t(a_{i,t})} \leq \frac{f_t(a_{i,t}\,|\,\calA^\opt\,\cup\,\{a_{j,t}\}_{j\in[i-1]})}{f_t(a_{i,t}\,|\,\{a_{j,t}\}_{j\in\calN_{i}})}$ per \Cref{def:curvature}, \cref{aux22:3} holds from submodularity, \cref{aux22:4} holds from \Cref{def:action-regret}, \cref{aux22:6} holds since $f_t$ is 2nd-order submodular, and \cref{aux22:7} holds from \Cref{def:coin}. %

Taking expectations on both sides of \cref{aux22:7} and leveraging~\cite[Theorem 1]{thune2019nonstochastic}, we prove \cref{eq:thm-4} by the following,
{\begin{align}
    &\sum_{t=1}^T f_t(\solopt) \leq (1+\kappa_f) \,\mathbb{E}\mleft[\sum_{t=1}^T f_t(\calA_t)\mright]\nonumber \\
    & + \kappa_f \sum_{i\in\calN} \mathbb{E}\mleft[\sum_{t=1}^T\ourcurv_{f_t,i}(\calN_{i})\mright] + \Tilde{O}\mleft(|\calN|\sqrt{\overline{|\calV|\,+\,d}\,{T}}\mright).\label{aux22:10}
\end{align}}

In the fully centralized scenario, we have $\calN_{i}=\calN\setminus\{i\}$. Thus, $\ourcurv_{f_t,i}(\calN_{i})= 0$, and thus \cref{eq:thm-1} is proved.
Finally, in the fully decentralized case where $\calN_{i}=\emptyset$, per \cref{aux22:4}, 
{\allowdisplaybreaks
\begin{align}
    \mathbb{E}\mleft[\sum_{t=1}^T f_t(\calA_t)\mright]&\geq \sum_{t=1}^T f_t(\solopt) - \kappa_f \sum_{i\in\calN} \mathbb{E}\mleft[\sum_{t=1}^T f_t(a_{i,t})\mright] \nonumber\\
    &\quad - \Tilde{O}\mleft(|\calN|\sqrt{\overline{|\calV|\,+\,d}\,{T}}\mright)
    \nonumber\\
    &\hspace{-.6cm}\geq \sum_{t=1}^T f_t(\solopt) - \frac{\kappa_f}{1-\kappa_f} \sum_{i\in\calN} \mathbb{E}\mleft[\sum_{t=1}^T f_t(a_{i,t})\mright] \nonumber\\
    &\quad - \Tilde{O}\mleft(|\calN|\sqrt{\overline{|\calV|\,+\,d}\,{T}}\mright).
\end{align}}and thus \cref{eq:thm-2} is proved. \qed

\end{document}